\newcommand{\R}{\mathbb{R}}
\newtheorem{proposition}{Proposition}
\newtheorem{lemma}{Lemma}
\title{Barely Biased Learning for Gaussian Process Regression}
\author{
  David R.~Burt\textsuperscript{\textasteriskcentered} \\
  Department of Engineering \\
  University of Cambridge\\
  Cambridge, UK \\
  \small\texttt{drb62@cam.ac.uk} \\
  \And
  Artem Artemev\textsuperscript{\textasteriskcentered} \\
  Department of Computing \\
  Imperial College London\\
  London, UK \\
  \small \texttt{a.artemev20@imperial.ac.uk}
\And
  Mark van der Wilk\\
  Department of Computing \\
  Imperial College London\\
  London, UK \\
  \small\texttt{m.vdwilk@imperial.ac.uk}
}
\begin{document}

\maketitle

\begin{abstract}
Recent work in scalable approximate Gaussian process regression has discussed a bias-variance-computation trade-off when estimating the log marginal likelihood. We suggest a method that adaptively selects the amount of computation to use when estimating the log marginal likelihood so that the bias of the objective function is guaranteed to be small. While simple in principle, our current implementation of the method is not competitive computationally with existing approximations.
\end{abstract}

\section{Introduction}

Scalable hyperparameter selection for Gaussian process regression (GPR) is a frequently-studied problem in machine learning. Recent work on approximate empirical Bayes in conjugate GPR models has explored a bias-variance-computation trade-off \citep{pmlr-v139-artemev21a, pmlr-v139-potapczynski21a, wenger2021reducing}. Generally the methods used are at least partially adaptive, in the sense that they use a stopping criterion, which is (often indirectly) related to the bias or variance of the method, to decide how many iterations of a Krylov subspace method to run, which controls the amount of computation used. Our main practical insight is that with minor modifications, we can more directly link the stopping criterion to the quality of our estimate of the log marginal likelihood. This allows us to ensure that in each iteration of hyperparameter learning, we obtain an estimate of the log marginal likelihood which has an expectation within $\epsilon$ of the log marginal likelihood, where $\epsilon>0$ is a user-specified parameter. This is achieved by an application of Gauss-Radau quadrature to obtain an estimate of the log determinant of the kernel matrix, the expectation of which is a lower bound on the log marginal likelihood.

\section{Related work}

\Citet{Gibbs97efficientimplementation} proposed using the method of conjugate gradients (CG) \citep{hestenes1952methods} for estimating the gradients of the log marginal likelihood of Gaussian process regression. They advocated the use of upper and lower bounds on the entries of the gradients as a method for deciding on the number of iterations of CG to run. \Citet{davies2005effective} carried out a detailed study and discussion of this approach. \Citet{ubaru2017fast} considered estimating the log marginal likelihood directly by employing Lanczos quadrature \citep{golub1969calculation} and a stochastic trace estimator \citep{hutchinson1989stochastic}. \Citet{gardner2018gpytorch} provided a GPU-compatible implementation of the estimators developed in \citet{Gibbs97efficientimplementation} and \citet{ubaru2017fast}, and demonstrated advantages of this approach on modern computer architectures.

\Citet{pmlr-v139-artemev21a} proposed using a low-rank approximation for the log determinant, as is done in variational GP regression \citep{pmlr-v5-titsias09a}, in conjunction with the method of conjugate gradients for lower bounding the quadratic form appearing in the log marginal likelihood. This approach has several advantages. First, it provides a natural stopping criterion for CG and allows the user to reuse past CG solutions, resulting in very few iterations of CG to be run. Second, it provides an objective function that is a deterministic lower bound, which was shown to resolve some of the optimization difficulties that can emerge with the method used by \citet{gardner2018gpytorch}, both due to bias and gradients.

\Citet{pmlr-v139-potapczynski21a} proposed a variant of the algorithm used by \citet{gardner2018gpytorch}
that provides unbiased estimates of the log marginal likelihood and its gradients using a ``Russian Roulette'' estimator \citep{kahn1955use}. This comes at the cost of additional variance. Both \citet{pmlr-v139-artemev21a} and \citet{wenger2021reducing} provide evidence that variance can be detrimental to the optimization of the log marginal likelihood.  \Citet{wenger2021reducing} proposed using a preconditioner as a control variate when performing stochastic Lanczos quadrature to reduce the variance of the estimator. 

Our work has two main differences from \citet{gardner2018gpytorch}. First, where they stop running the Krylov subspace methods based on the norm of the residual produced by the method of conjugate gradients, we rely on upper and lower bounds on an unbiased estimate of the log marginal likelihood. By doing this, we can be sure that more iterations of the Krylov subspace method would not substantially improve our estimate of the log marginal likelihood. Second, we use a unified objective and gradient function, instead of approximating both separately. The main reason for previous work avoiding such an approach is the computational cost of differentiating through the iterative solver. We avoid this by instead viewing the iterative solver as outputting auxiliary parameters, and differentiate through a bound based on these parameters, ignoring the dependence of the auxiliary parameters on hyperparameters. This can be thought of as essentially a block-updating procedure where we use gradient descent to update hyperparameters and a Krylov subspace method to update auxiliary parameters.

\section{Background}
We focus on the problem of Bayesian regression with a Gaussian process prior and additive, homoscedastic Gaussian noise. 

\subsection{Gaussian process regression and empirical Bayes}
We assume a dataset, $D=\{(x_i,y_i)\}_{i=1}^n$ with $y_i \in \R$ has been observed. The prior is a mean zero Gaussian process with prior covariance function $k_{\theta}$,  and $\theta$ are model hyperparameters that determine, for example, the shape and scale of the covariance function. Further, we denote the variance of the additive noise model by $\sigma^2$. We let $\nu=\{\theta,\sigma^2\}$. Let $K=K_{\nu}$ denote the $n \times n$ matrix with entries $[K_\nu]_{i,j} = k(x_i,x_j) + \delta_{i,j} \sigma^2$ for $1 \leq i,j \leq n$. The marginal likelihood of the data for this model is given by,
\begin{align}\label{eqn:lml}
    \mathcal{L}(\nu) = - c - \frac{1}{2}\log\det(K_{\nu}) - \frac{1}{2} y^\top K_{\nu}^{-1}y,
\end{align}
where $c=-\frac{n}{2}\log 2\pi$, and $y \in \R^n$ denotes the vector formed by stacking the $y_i$. We are interested in selecting $\nu$ via empirical Bayes, which involves maximizing $\mathcal{L}(\nu)$ with respect to $\nu$. Finding a global optimum of $\mathcal{L}(\nu)$ is generally difficult, but gradient-based local optimization of $\mathcal{L}(\nu)$ has been shown to be a successful heuristic for selecting settings of $\nu$ that balance data fit with model complexity \citep[Chapter 5]{rasmussen2006gaussian}. However, computing $\mathcal{L}(\nu)$ and $\nabla_{\nu}\mathcal{L}(\nu)$ directly is typically done with a Cholesky factorization of $K_{\nu}$ which results in a cost that scales cubically with the number of observed datapoints.

\subsection{The method of conjugate gradients and stochastic Lanczos quadrature}

\paragraph{The method of conjugate gradients}

The term $y^\top K_{\nu}^{-1}y$ in \cref{eqn:lml} can be computed with a single inner product if we can find a solution to the system of equations,
$K_{\nu}v=y$. Given an initial guess $v_0$, the method of conjugate gradients \citep{hestenes1952methods} constructs a sequence of vectors $\{v_0,v_1,\dotsc v_n\}$ such that $v_n$ is a solution to this system of equations. Computing a new $v_i$ involves a single matrix-vector multiplication, as well as some vector-vector operations. Importantly, for many systems of equations the $v_i$ converge rapidly to the solution of the system of equations, leading to good approximations to $y^\top K_{\nu}^{-1}y$, without running $n$ iterations. This can lead to large computational savings as compared to performing Cholesky decomposition. \Citet{Gibbs97efficientimplementation} observed that for any $v \in \R^n$, 
\begin{align}\label{eqn:cg-bounds}
 2r^\top v + v^\top K_{\nu} v   \leq y^\top K_{\nu}^{-1}y \leq \frac{r^\top r}{\sigma^2} + 2r^\top v + v^\top K_{\nu} v
\end{align}
with $r=y-K_{\nu}v$ and equality holding once CG has converged. If the upper and lower bounds in \cref{eqn:cg-bounds} are close, we can be sure that the method has returned an accurate solution. \Citet{pmlr-v139-artemev21a} generalized this approach by suggesting a tighter upper bound depending on a low-rank approximation to the kernel matrix, and proposed differentiating through this bound directly (instead of estimating the gradients separately). This leads to a sort of "block" maximization procedure, where $v$ is viewed as an auxiliary variable. CG optimizes the bound with respect to $v$ and is alternated with optimization with respect $\nu$ using a gradient-based method. We follow this approach for estimating $y^\top K_{\nu}^{-1}y$, as it already provides a stopping criterion directly related to the objective.

\paragraph{Stochastic Lanczos Quadrature}

The application of stochastic Lanczos quadrature to estimating the log determinant of a matrix was introduced in \citet{ubaru2017fast}. The idea is to note that $\log \det(K_{\nu}) = \mathrm{tr}(\log K_{\nu})$. Hutchinson's trace estimator \citep{hutchinson1989stochastic} then yields,
\begin{align}
    \log \det(K_{\nu}) = \mathbb{E}[z^\top \log K_{\nu} z],
\end{align}
where $z$ is a random vector in $\mathbb{R}^n$ satisfying $\mathbb{E}[zz^\top]=I$. Lanczos quadrature provides a method for estimating expressions of the form $z^\top f(K_{\nu}) z$, by writing this as a Riemann-Stieltjes integral and using a Gauss quadrature rule. The nodes and weights of this rule can be computed through the Lanczos algorithm  \citep{golub1969calculation}. Moreover, as noted in \citet[Theorem 3.2]{golub1994matrices}, for a smooth function $f$ with negative even derivatives this results in a lower bound on the desired quantity. \Citet{10.2307/2028604} showed that with minor modifications, a Gauss-Radau quadrature rule can be used (i.e.~a quadrature with one prescribed node). If the odd derivatives of $f$ are positive, and the prescribed node is at least as large as the largest eigenvalue of $K_{\nu}$, the Gauss-Radau rule results in an upper bound on the desired quantity \citep[Theorem 3.2]{golub1994matrices}.

\Citet{pmlr-v48-lig16} previously made similar observations regarding a retrospective error analysis for kernel methods based on Gauss and Gauss-Radau rules, and provided guarantees about the quality of the resulting upper and lower bounds. While they noted that this sort of approach could potentially be applied to Gaussian processes, they did not consider the application to approximate empirical Bayes with Gaussian processes. This has some unique challenges due to the need to compute gradients of the estimator, that we investigate in this note. \Citet{pmlr-v139-potapczynski21a} observed that the Gauss quadrature estimator results in an upper bound on the log-determinant, and used this to heuristically explain the effect of the bias introduced to GPR by a lack of convergence.

\paragraph{Method for estimating the LML and its gradients}

Our approach is similar to the one in \citet{pmlr-v139-artemev21a}, but replacing the deterministic log-determinant estimator they use with an estimator based on stochastic Lanczos quadrature. In particular, our objective function is
\begin{align}\label{eqn:objective}
   \tilde{L}(\nu) =  c-\frac{1}{2s}\sum_{i=1}^sz_i^\top T_i \log(T_i^{\top}K_{\nu}T_i) T_i^\top z_i - \frac{1}{2} \left(\frac{r^\top r}{\sigma^2} + 2r^\top v + v^\top K_{\nu} v\right). 
\end{align}
where the $T_i$ are $n \times t$ auxiliary matrices subject to the constraint $T_i^\top T_i= I_t$ and $z_i$ is in the column space of $T_i$, $v$ is a vector in $\mathbb{R}^n$ and $r=y-K_{\nu}v$.  In \cref{app:lower-bound} we show that $\tilde{L}(\nu) \leq L(\nu)$ and is exact when $v = K_\nu^{-1}y$ and $T^\top_i T_i = I_n$.

 We select $T_i$ to be the orthogonal basis for a $t$ dimensional Krylov subspace constructed by the Lanczos algorithm with initial vector $z_i/\|z_i\|$, in which case in exact arithmetic $T^\top_i K_\nu T_i$ is tridiagonal and the first term in \cref{eqn:objective} is the same as the stochastic Lanczos Gauss quadrature estimate of $\log \det (K_\nu)$ proposed by \citet{ubaru2017fast}. The vector $v$ is updated with CG as in \citet{pmlr-v139-artemev21a}. We compute the matrix logarithm via eigendecomposition. In practice, we also make use of a low-rank preconditioner, and use the tighter bound on the quadratic term proposed in \citet{pmlr-v139-artemev21a} based on such a preconditioner.

We assess lower and upper bounds on \cref{eqn:objective} during each iteration of the Lanczos algorithm and CG, and stop running these methods when the bias is probably less than $\epsilon$, where $\epsilon>0$ is a user-specified parameter. We directly differentiate through \cref{eqn:objective}, ignoring the dependence of $T$ and $v$ on $\nu$ that is implicit in the method we used to select them. As such, like in \citet{pmlr-v139-artemev21a}, our optimization procedure can be seen as a form of block updating of model hyperparameters and auxiliary parameters.

\section{Preliminary experimental results for Barely Biased GP (BBGP)}

We present results for BBGP on three UCI datasets \citep{Dua:2019}. In experiments, we use a Mat\'ern 3/2 kernel with a separate lengthscale per input dimension. Details of the normalization of the data and initialization of hyperparameters are given in \cref{app:experiments}. We select hyperparameters for BBGP with Adam \citep{kingma2014adam} using initial learning rate $0.1$. We compare against the implementations of CGLB and SGPR provided in GPflow \citep{GPflow2017}. Inducing points are initialized following the `greedy' procedure discussed in \citet{JMLR:v21:19-1015} then optimized jointly with hyperparameters using L-BFGS \citep{liu1989limited}. We also compare our method against the conjugate-gradient based implementation provided in \citet{gardner2018gpytorch} (`Iterative GP'). We optimize hyperparameters of this method with the procedure described in \citet{pmlr-v139-artemev21a}.

\begin{figure}[htb]
    \begin{minipage}[t]{0.495\linewidth}
        \centering
        \includegraphics[width=\textwidth]{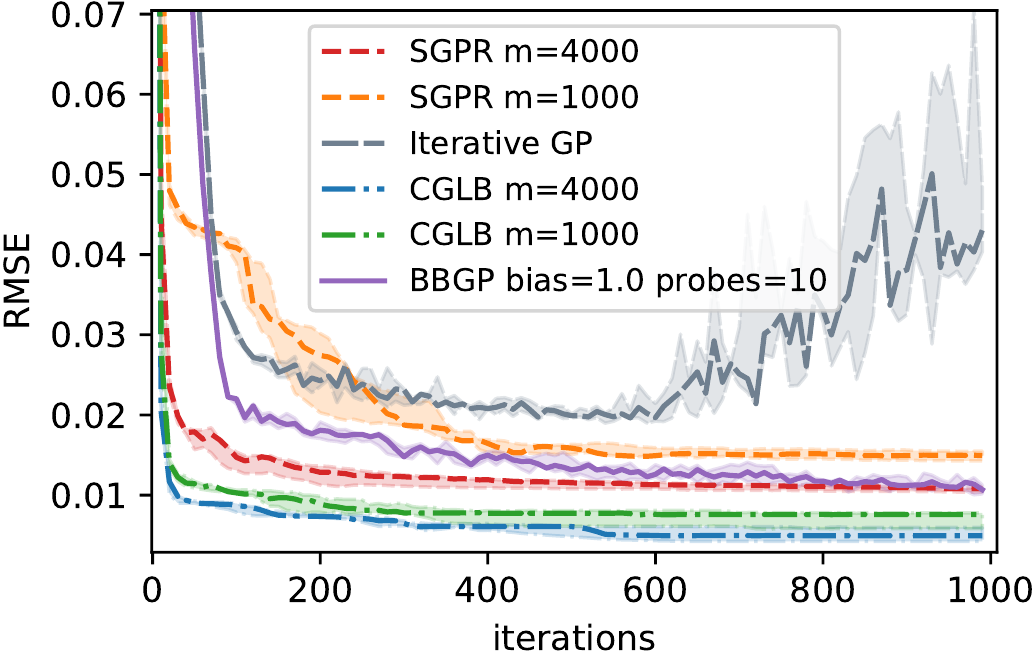}
    \end{minipage}
    \hfill
    \begin{minipage}[t]{0.495\linewidth}
        \centering
        \includegraphics[width=\textwidth]{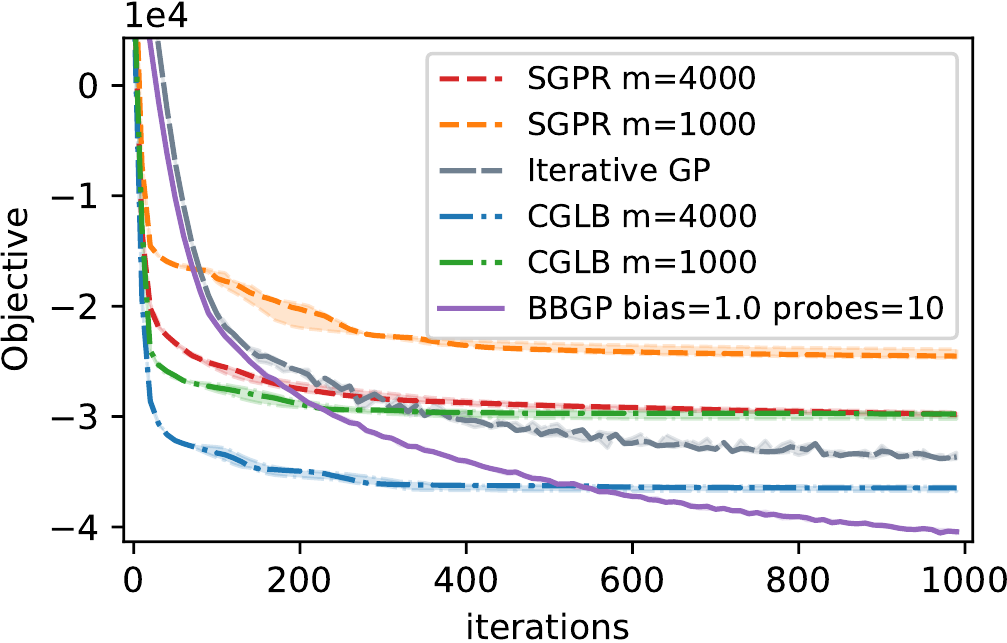}
    \end{minipage}
    \caption{Model performance on testing data and objective (an estimation of negative log marginal likelihood) traces over optimization steps for \texttt{bike} dataset.}
    \label{fig:bike-rmse-nlml}
\end{figure}

In the conducted experiments we record RMSE on testing dataset splits to compare performances of models. BBGP showed promising results and dominated CGLB in terms of RMSE metric in two out of three datasets (\cref{app:experiments}, \cref{fig:elevators-rmse-nlml,fig:pol-rmse-nlml}) and outperformed SGPR and Iterative GP on all datasets. BBGP showed robust behaviour during stochastic optimization in intrinsically low noise data such as \texttt{bike} and \texttt{poletele} as a counter to Iterative GP. Iterative GP exhibited divergence in performance and overestimated LML for datasets with low noise \citep[as discussed in][]{pmlr-v139-artemev21a}.
We do not provide an estimation of GP variance and therefore we did not compute log predictive density (LPD). A user can leverage any available GP variance estimator (e.g.~\citet{pmlr-v5-titsias09a, pmlr-v80-pleiss18a}).

BBGP has two parameters to tune: the bias parameter $\epsilon > 0$ and the number of probe vectors $s$ in \cref{eqn:objective}.  We investigated the effect of large bias $10.0$ and $100.0$, additionally to the default $1.0$ value together with small number of probe vectors $s=1$. Our study (\cref{app:experiments}, \cref{fig:bias-study}) showed that $s=1$ does not increase the variance dramatically, and yields similar performance. We note that $10$ or even $100$ nats correspond to a very small change in hyperparameters when the likelihood noise is very small. With larger values of $\epsilon$ we observed that number of steps required to reach the desired level of bias decreases and the stability of the RMSE and LML estimation during training decreases, but stays within an acceptable range (\cref{app:experiments}, \cref{fig:bias-study}). Larger $\epsilon$ and $s = 1$ reduced training time significantly, however, it was not enough to compete with other methods; BBGP took about 4 times as long to run as CGLB on \texttt{bike} and 2 times as long to run on \texttt{poletele}. 




\section{Current obstacles to the proposed approach}

While we believe that the approach of directly relating the stopping criterion for a Krylov subspace method with the objective function provides an elegant alternative to existing criterion, our current implementation is not competitive with other methods in terms of computational cost per iteration. In order to perform full re-orthogonalization and to compute the gradients of the approximate objective, we store intermediate vectors produced during CG. These algorithmic steps cause memory issues that we have not yet been able to circumvent. As a result, for datasets with a very small likelihood variance, we set a maximum number of iterations to run, which loses much of the elegance of our proposed approach. Additionally the computational time to estimate the LML for these datasets can be much higher than with other approximate methods.
\bibliographystyle{apalike}
\bibliography{icbinb2021}
\clearpage
\appendix

\section{Proof of Lower bound}\label{app:lower-bound}

As \cref{eqn:cg-bounds} has been previously established \citep{Gibbs97efficientimplementation}, it suffices to show that $\log\det(K) \leq \mathbb{E}[z^\top T\log(T^{\top}K_{\nu}T)T^\top z]$ for such that $\mathbb{E}[zz^\top]=I$ and $T$ sat. Writing $\log \det (K) = \mathrm{tr}(\log(K)) = \mathbb{E}[z^\top \log(K) z]$ it suffices to show that that for any $z\in \R^n$, and $T$ satisfying the assumptions, $z^\top\log(K_{\nu})z \leq z^\top\log(T^{\top}K_{\nu}T)z$.

Since$T^\top T=I_t$, then $TT^\top = TT^\top TT^\top$, and $(TT^\top)^\top=TT^\top$ i.e.~$TT^{\top}$ is a Hermitian projection. We define $P=TT^\top$. Since $z$ is by assumption in the column span of $T$ we have $Pz = z$. Hence,
\begin{align}
    z^\top \log(K)z = z^\top P\log(K)Pz = z^\top TT^\top\log(K)TT^\top z.
\end{align}
Define $w = T^\top z \in \R^k$, so this can be rewritten
\begin{align}
    z^\top \log(K)z = w^\top T^\top\log(K)T w.
\end{align}
It therefore suffices to prove the following proposition:

\begin{proposition}\label{prop:log-psd}
Let $K \in \mathbb{R}^{n \times n}$ symmetric, positive definite. Let $T \in \mathbb{R}^{n \times t}$ such that $T^{\top}T = I_t$. Then,
\[
\log(T^{\top}KT) - T^{\top}\log(K)T
\]
is symmetric positive semi-definite.
\end{proposition}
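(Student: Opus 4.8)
The plan is to recognize this statement as an instance of the operator Jensen inequality for the (operator concave) logarithm: for an isometry $T$ with $T^\top T = I_t$ and any operator concave $f$, one has $f(T^\top K T) \succeq T^\top f(K) T$, which is exactly the claim with $f = \log$. Rather than invoke this as a black box (e.g.~Hansen--Pedersen), I would give a self-contained argument based on the integral representation of $\log$, which reduces the whole proposition to a single elementary matrix inequality.

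First, symmetry is immediate: $T^\top K T$ is symmetric and positive definite (since $K \succ 0$ and $T$ has orthonormal, hence full, column rank), so $\log(T^\top K T)$ is symmetric by the functional calculus, and $T^\top \log(K) T$ is plainly symmetric; their difference is therefore symmetric, and it remains only to prove positive semi-definiteness.

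Next I would apply the representation $\log(x) = \int_0^\infty \left( \frac{1}{1+s} - \frac{1}{x+s} \right) ds$ for $x>0$ to the positive definite matrices $K$ and $T^\top K T$ via the functional calculus. Using $T^\top T = I_t$ to cancel the constant term and $T^\top (sI) T = s I_t$, the difference collapses to
\[
\log(T^\top K T) - T^\top \log(K) T = \int_0^\infty \Bigl( T^\top (K+sI)^{-1} T - (T^\top K T + s I_t)^{-1} \Bigr)\, ds .
\]
Since this is an integral against a nonnegative measure, it suffices to show, for every fixed positive definite $M = K + sI$, that $T^\top M^{-1} T \succeq (T^\top M T)^{-1}$, where I also used $T^\top M T = T^\top K T + s I_t$.

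The key step, which I expect to carry the real content, is this compression inequality $T^\top M^{-1} T \succeq (T^\top M T)^{-1}$ — precisely the operator convexity of matrix inversion specialized to isometries. I would prove it by setting $A = M^{1/2} T$ and $B = M^{-1/2} T$, so that $T^\top M T = A^\top A$, $T^\top M^{-1} T = B^\top B$, and crucially $A^\top B = T^\top T = I_t$ (and likewise $B^\top A = I_t$). Letting $P = A (A^\top A)^{-1} A^\top$ denote the orthogonal projection onto the range of $A$, the bound $0 \preceq I - P$ yields
\[
0 \preceq B^\top (I - P) B = B^\top B - B^\top A (A^\top A)^{-1} A^\top B = T^\top M^{-1} T - (T^\top M T)^{-1}.
\]
Integrating over $s$ then gives the proposition. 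The only subtlety worth checking is the convergence of the integral after compression, which holds because both $K$ and $T^\top K T$ have spectra bounded away from $0$ and $\infty$.
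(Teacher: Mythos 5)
Your proof is correct, and it differs from the paper's in the step that carries the real content. The first half coincides exactly: both arguments use the integral representation $\log(x)=\int_0^\infty\bigl(\tfrac{1}{1+s}-\tfrac{1}{x+s}\bigr)\,ds$ to reduce the proposition to the per-$s$ resolvent inequality $T^\top(K+sI_n)^{-1}T \succeq (T^\top K T+sI_t)^{-1}$, and both pass the positive semi-definiteness through the integral. Where you diverge is in proving that inequality. The paper applies the Sherman--Morrison--Woodbury identity to both resolvents, rewriting the difference as $\tfrac{1}{s}T^\top K^{1/2}\bigl((K^{1/2}TT^\top K^{1/2}+sI_n)^{-1}-(K+sI_n)^{-1}\bigr)K^{1/2}T$, and then concludes from $K^{1/2}TT^\top K^{1/2} \preceq K$ --- a step that implicitly relies on the fact that matrix inversion reverses the Loewner order, which the paper never states as a lemma. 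You instead prove the compression inequality $T^\top M^{-1}T \succeq (T^\top M T)^{-1}$ (with $M=K+sI_n$) directly: setting $A=M^{1/2}T$, $B=M^{-1/2}T$, observing $A^\top B = I_t$, and expanding $B^\top(I-P)B\succeq 0$ for the orthogonal projection $P=A(A^\top A)^{-1}A^\top$. Your route is more elementary and self-contained, since it needs only that $I-P$ is positive semi-definite rather than Woodbury plus operator anti-monotonicity of the inverse; it also makes transparent that the proposition is an instance of the operator Jensen inequality for the operator-concave function $\log$, so the same argument generalizes verbatim to any function with such an integral representation. The paper's computation, by contrast, stays entirely within explicit matrix algebra built from its stated Woodbury lemma, at the cost of leaning on an unstated monotonicity fact. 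Both proofs are valid; yours fills a small logical gap the paper glosses over.
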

We will make use of several lemmas in proving \cref{prop:log-psd}.
\begin{lemma}[]\label{lem:integral-psd}
Let $A(s) \in \R^{n \times n}$ be a matrix parameterized by $s \in (0, \infty)$ such that, for each $s$, $A(s)$ is symmetric positive semi-definite and $A(s)$ is integrable. Then, $\int_{s=0}^\infty A(s) ds$ is symmetric, positive semi-definite.
\end{lemma}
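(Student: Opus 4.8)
The plan is to reduce both claims to the scalar characterizations of symmetry and positive semi-definiteness and then exploit linearity of the entrywise matrix integral. First I would recall that $M := \int_0^\infty A(s)\,ds$ is defined componentwise, so $[M]_{ij} = \int_0^\infty [A(s)]_{ij}\,ds$, and each of these scalar integrals exists by the integrability hypothesis. With $M$ in hand as a genuine matrix, everything else follows from manipulating quadratic forms.

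Symmetry I expect to be immediate. Since each $A(s)$ is symmetric, $[A(s)]_{ij} = [A(s)]_{ji}$ for every $s$, and integrating this identity in $s$ yields $[M]_{ij} = [M]_{ji}$, so $M$ is symmetric.

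For positive semi-definiteness I would fix an arbitrary $x \in \R^n$ and expand the quadratic form: $x^\top M x = \sum_{i,j} x_i x_j \int_0^\infty [A(s)]_{ij}\,ds$. The central step is to move the integral outside the finite double sum, obtaining $x^\top M x = \int_0^\infty \left(\sum_{i,j} x_i [A(s)]_{ij} x_j\right) ds = \int_0^\infty x^\top A(s) x \, ds$. Because each $A(s)$ is positive semi-definite, the integrand $x^\top A(s) x$ is nonnegative for every $s$, and the integral of a nonnegative function over $(0,\infty)$ is nonnegative, so $x^\top M x \geq 0$. As $x$ was arbitrary, $M$ is positive semi-definite.

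The only point requiring care — and the nearest thing to an obstacle — is justifying the interchange of summation and integration. Since $n$ is finite, the quadratic form $x^\top A(s) x$ is a \emph{finite} linear combination of the entries $[A(s)]_{ij}$, each integrable by hypothesis; finite linearity of the integral therefore licenses the interchange directly, with no appeal to dominated convergence or Fubini–Tonelli. I would note this explicitly rather than invoke any heavier measure-theoretic machinery, since the finiteness of the dimension is exactly what keeps the argument elementary.
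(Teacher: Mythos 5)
Your proof is correct and follows essentially the same route as the paper's: fix an arbitrary vector, use (finite) linearity of the integral to write $x^\top \left(\int_0^\infty A(s)\,ds\right) x = \int_0^\infty x^\top A(s)\,x\,ds$, and conclude from nonnegativity of the integrand. Your additional remarks on entrywise symmetry and the justification of the sum--integral interchange are fine elaborations of details the paper leaves implicit, but they do not change the argument.
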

\begin{proof}
Let $z \in \R^n$ arbitrary. By linearity of integral, we have
$z^{\top} \left(\int_{s=0}^\infty A(s)ds\right) z = \int_{s=0}^\infty z^{\top} A(s) z ds$. The integrand is non-negative, from which the result follows.
\end{proof}
\begin{lemma}[\cite{3950744}]\label{lem:integral-rep}
Let $A$ be a square matrix with no eigenvalues on $(-\infty,0]$, then
\begin{equation}
    \log(A) = \int_{s=0}^\infty \frac{1}{1+s}I - (A+sI)^{-1}ds.
\end{equation}
\end{lemma}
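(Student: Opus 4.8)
The plan is to prove Proposition~\ref{prop:log-psd} by combining the integral representation of the matrix logarithm from Lemma~\ref{lem:integral-rep} with the pointwise positive-semidefiniteness principle of Lemma~\ref{lem:integral-psd}. The target quantity $\log(T^\top K T) - T^\top \log(K) T$ is manifestly symmetric (both terms are symmetric since $K$ and $T^\top K T$ are symmetric and the matrix logarithm preserves symmetry), so the whole content is the semidefiniteness. My strategy is to write both logarithms as integrals over $s \in (0,\infty)$, subtract them under a single integral sign, and then show that the integrand is symmetric positive semidefinite for each fixed $s$; Lemma~\ref{lem:integral-psd} then delivers the conclusion.

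First I would check the hypotheses of Lemma~\ref{lem:integral-rep} apply to both $K$ and $T^\top K T$: since $K$ is symmetric positive definite, it has no eigenvalues on $(-\infty,0]$, and since $T^\top T = I_t$ the matrix $T^\top K T$ is also symmetric positive definite (for any nonzero $w \in \R^t$ we have $w^\top T^\top K T w = (Tw)^\top K (Tw) > 0$ because $Tw \neq 0$, as $T$ has full column rank). Applying Lemma~\ref{lem:integral-rep} to each and using linearity of the integral together with the identity $T^\top \left(\int A(s)\,ds\right) T = \int T^\top A(s) T \,ds$, I would write
\begin{align}
\log(T^\top K T) - T^\top \log(K) T = \int_{s=0}^\infty \left[ T^\top (K + sI_n)^{-1} T - (T^\top K T + sI_t)^{-1} \right] ds.
\end{align}
Here I have to track the identity matrices carefully: the $\frac{1}{1+s}I$ terms produce $\frac{1}{1+s}I_t - T^\top \frac{1}{1+s}I_n T = \frac{1}{1+s}(I_t - T^\top T) = 0$ since $T^\top T = I_t$, so those terms cancel exactly and only the resolvent terms survive.

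The main obstacle, and the heart of the argument, is showing that the integrand
\[
M(s) = T^\top (K + sI_n)^{-1} T - (T^\top K T + sI_t)^{-1}
\]
is positive semidefinite for each fixed $s > 0$. This is an instance of an operator convexity / Schur-complement phenomenon: one wants $T^\top A^{-1} T \succeq (T^\top A T)^{-1}$ for any symmetric positive definite $A = K + sI_n$ and any isometry $T$. I would prove this by a block matrix argument. Consider the symmetric matrix
\begin{align}
\begin{pmatrix} A & T \\ T^\top & T^\top A^{-1} T \end{pmatrix},
\end{align}
which is positive semidefinite precisely because its Schur complement with respect to the $A$ block is $T^\top A^{-1} T - T^\top A^{-1} T = 0 \succeq 0$. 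Taking instead the Schur complement with respect to the lower-right block (after verifying it is invertible) forces $A - T (T^\top A^{-1} T)^{-1} T^\top \succeq 0$; alternatively, and more directly, I would argue that since $T^\top A T \succeq 0$ and the map $X \mapsto X^{-1}$ is operator convex on positive definite matrices, the compression inequality $(T^\top A T)^{-1} \preceq T^\top A^{-1} T$ holds for the isometry $T$. This operator Jensen inequality for the operator-convex function $x \mapsto 1/x$ is exactly what yields $M(s) \succeq 0$. Once pointwise semidefiniteness of $M(s)$ is established, checking integrability of $M(s)$ (the integrand decays like $s^{-2}$ at infinity and is bounded near zero by positive definiteness of both $K$ and $T^\top K T$) lets me invoke Lemma~\ref{lem:integral-psd} to conclude that the full integral, and hence $\log(T^\top K T) - T^\top \log(K) T$, is symmetric positive semidefinite.
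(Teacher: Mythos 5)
You have proved the wrong statement. The statement under review is \cref{lem:integral-rep} itself --- the integral representation of the matrix logarithm --- but your proposal treats that representation as a given ingredient and instead reconstructs the proof of \cref{prop:log-psd}. Nothing in your argument establishes the identity $\log(A)=\int_0^\infty\bigl(\frac{1}{1+s}I-(A+sI)^{-1}\bigr)\,ds$; note the paper does not prove it either, deferring to the cited reference, so a blind proof here must be self-contained. Such a proof needs two steps. First, the scalar identity: for $a\in\mathbb{C}\setminus(-\infty,0]$, with the principal branch,
\begin{equation*}
    \int_{0}^{\infty}\Bigl(\frac{1}{1+s}-\frac{1}{a+s}\Bigr)\,ds
    =\lim_{R\to\infty}\Bigl[\log(1+s)-\log(a+s)\Bigr]_{s=0}^{R}
    =\lim_{R\to\infty}\log\frac{1+R}{a+R}+\log a=\log a,
\end{equation*}
where the antiderivative is legitimate because $a+s\notin(-\infty,0]$ for all $s\ge 0$. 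Second, the lift to matrices: the hypothesis that $A$ has no eigenvalues on $(-\infty,0]$ guarantees $(A+sI)^{-1}$ exists for every $s\ge 0$ and that the scalar identity holds at each point of the spectrum, so the matrix identity follows from the holomorphic functional calculus (for the symmetric positive definite matrices $K$ and $T^\top KT$ actually used in the paper, plain diagonalization $A=Q\Lambda\Qt$ suffices, applying the scalar identity to each eigenvalue). One should also verify convergence of the improper integral: $\frac{1}{1+s}I-(A+sI)^{-1}=\frac{1}{1+s}(A-I)(A+sI)^{-1}=O(s^{-2})$ as $s\to\infty$, and the integrand is bounded near $s=0$.

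As a side remark: read as a blind proof of \cref{prop:log-psd}, your argument is essentially sound and genuinely different from the paper's at exactly one point. Where the paper proves the resolvent inequality $T^\top(K+sI_n)^{-1}T\succeq(T^\top KT+sI_t)^{-1}$ by two applications of \cref{lem:woodbury} together with $K^{1/2}TT^\top K^{1/2}\prec K$, you invoke the operator Jensen inequality $f(T^\top XT)\preceq T^\top f(X)T$ for the operator-convex map $f(x)=1/x$ and the isometry $T$; your Schur-complement route also closes, though it needs two unstated steps (compress $A\succeq T(T^\top A^{-1}T)^{-1}T^\top$ by $T$ using $T^\top T=I_t$, then use antitonicity of matrix inversion), and a cleaner positive-semidefinite witness is
\begin{equation*}
    \begin{pmatrix} T^\top AT & I_t \\ I_t & T^\top A^{-1}T \end{pmatrix}
    =\begin{pmatrix} T & A^{-1}T\end{pmatrix}^{\!\top} A \begin{pmatrix} T & A^{-1}T\end{pmatrix}\succeq 0.
\end{equation*}
Indeed, once you cite operator Jensen you could take $f=-\log$, itself operator convex, and obtain $-\log(T^\top KT)\preceq -T^\top\log(K)T$ in one line, bypassing \cref{lem:integral-psd,lem:integral-rep,lem:woodbury} entirely. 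But none of this repairs the actual gap: \cref{lem:integral-rep}, the statement you were asked to prove, is assumed rather than proved.
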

See the cited math stack-exchange article for a proof.

\begin{lemma}[Special Case of Sherman-Morrison-Woodbury Lemma]\label{lem:woodbury}
\begin{equation}
    (UV + sI)^{-1} = \frac{1}{s}\left(I-U(VU+sI)^{-1}V\right)
\end{equation}

\end{lemma}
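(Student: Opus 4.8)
\textbf{Proof plan for Lemma~\ref{lem:woodbury}.}

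The plan is to verify the identity directly by showing that the proposed expression for the inverse, when multiplied by $(UV + sI)$, yields the identity matrix. Since the claim is that
\[
(UV + sI)^{-1} = \tfrac{1}{s}\left(I - U(VU + sI)^{-1}V\right),
\]
it suffices to check that the right-hand side is a genuine two-sided inverse of $UV + sI$. I would multiply $(UV + sI)$ on the right by the candidate inverse and simplify to $I$; because both factors are square matrices of the same size, a one-sided inverse is automatically two-sided, so checking one product suffices.

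First I would expand the product
\[
(UV + sI)\cdot \tfrac{1}{s}\left(I - U(VU + sI)^{-1}V\right),
\]
distributing to obtain four terms. The diagonal-scaling terms combine to give $I + \tfrac{1}{s}UV$ minus the correction terms $\tfrac{1}{s}UV(VU+sI)^{-1}V$ and $U(VU+sI)^{-1}V$. The key algebraic manoeuvre is the ``push-through'' step: I would factor a $U$ out on the left and a $V$ out on the right of all the correction terms, reducing the problem to showing that an inner expression equals $UV$. Concretely, after factoring, the correction terms take the form $U\big[\,\tfrac{1}{s}VU(VU+sI)^{-1} + (VU+sI)^{-1}\,\big]V$, and I would combine the bracketed quantity over the common factor $(VU+sI)^{-1}$, using $\tfrac{1}{s}VU + I = \tfrac{1}{s}(VU + sI)$, so that the bracket collapses to $\tfrac{1}{s}(VU+sI)(VU+sI)^{-1} = \tfrac{1}{s}I$. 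This shows the combined correction equals $\tfrac{1}{s}UV$, which exactly cancels the leftover $\tfrac{1}{s}UV$ term, leaving $I$.

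The main subtlety to watch is invertibility: the identity only makes sense if both $UV + sI$ and $VU + sI$ are invertible, and these are matrices of possibly different sizes ($U$ is $n \times t$ and $V$ is $t \times n$, so $UV + sI$ is $n\times n$ while $VU + sI$ is $t \times t$). Since $s > 0$ in the application and the relevant products arise from a positive definite kernel setting, I would note that $UV$ and $VU$ share the same nonzero eigenvalues, so $UV + sI$ is invertible precisely when $VU + sI$ is; for $s$ not an eigenvalue of $-UV$ (in particular for $s>0$) both inverses exist and the manipulation is valid. The arithmetic here is entirely routine once the push-through identity is in hand, so I do not expect any genuine obstacle; the only care required is bookkeeping the matrix dimensions and justifying that the one inverse appearing on the right-hand side is well-defined.
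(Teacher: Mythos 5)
Your proof is correct. Note that the paper itself states Lemma~\ref{lem:woodbury} without any proof, treating it as a standard identity, so there is no paper argument to compare against; your direct ``push-through'' verification --- multiplying $(UV+sI)$ by the candidate inverse, collecting the correction terms as $U\bigl[\tfrac{1}{s}VU(VU+sI)^{-1}+(VU+sI)^{-1}\bigr]V = \tfrac{1}{s}U(VU+sI)(VU+sI)^{-1}V = \tfrac{1}{s}UV$, and cancelling --- is exactly the standard way to establish it, and your remark that a one-sided inverse of a square matrix is automatically two-sided disposes of the need to check the product in the other order. Your invertibility discussion is also the right bookkeeping for how the lemma is used in the proof of Proposition~\ref{prop:log-psd}: there it is applied with $U = K^{1/2}$, $V = K^{1/2}$ and with $U = T^{\top}K^{1/2}$, $V = K^{1/2}T$, so $VU$ is positive semi-definite in both cases and $VU+sI$ is invertible for every $s>0$; since $UV$ and $VU$ share the same nonzero eigenvalues, $UV+sI$ is invertible as well. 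One small caution on wording: your parenthetical ``in particular for $s>0$'' is valid only because of this PSD structure --- for general $U,V$, positivity of $s$ alone does not guarantee invertibility --- but since you explicitly condition on the positive (semi-)definite kernel setting, this is a matter of phrasing rather than a gap.
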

\begin{proof}[Proof of \cref{prop:log-psd}]

By assumption $K$ is symmetric positive definite so its has no eigenvalues in $(-\infty, 0]$. $T^{\top}KT$ is positive semi-definite, and by Cauchy's interlacing theorem, its eigenvalues  are contained in the convex hull of the eigenvalues of $K$, hence it is positive definite and both logarithms are well-defined.

We use \cref{lem:integral-rep} to represent $\log(T^\top K T)$ and $T^\top \log(K) T$:
\begin{align}
    \log(T^\top K T) &= \int_{s=0}^\infty \frac{1}{1+s}I_t - (T^\top K T +sI_t)^{-1}ds.
\end{align}
and
\begin{align}
    T^\top \log(K)T &= T^\top \left(\int_{s=0}^\infty \frac{1}{1+s}I_n - (K+sI_n)^{-1}ds \right)T \\
    & = \int_{s=0}^\infty \frac{1}{1+s}T^\top T - T^\top(K+sI_n)^{-1}Tds \\
    &=\int_{s=0}^\infty \frac{1}{1+s}I_t - T^\top(K+sI_n)^{-1}Tds.
\end{align}
Hence, 
\begin{align}
    \log(T^\top KT) - T^\top \log(K) T = \int_{s=0}^\infty \left(T^\top(K+sI_n)^{-1}T-(T^\top KT+sI_t)^{-1}\right)ds.
\end{align}    
By \cref{lem:integral-psd}, it suffices to show that $T^\top (K+sI_n)^{-1}T-(T^\top KT+sI_t)^{-1}$ is positive definite for all $s$.

Applying \cref{lem:woodbury} to the first matrix,
\begin{align}
    T^\top (K^{1/2}K^{1/2}+sI_n)^{-1}T^\top &= \frac{1}{s}T^\top\left(I - K^{1/2}(K+sI_n)^{-1}K^{1/2}\right)T \\
    & = \frac{1}{s}\left(I_t - T^\top K^{1/2}(K+sI_n)^{-1}K^{1/2}T\right)
\end{align}
Applying \cref{lem:woodbury} to the second matrix,
\begin{align}
    (T^\top KT+sI_t)^{-1} &= \frac{1}{s}\left(I_t - T^\top K^{1/2}(K^{1/2}TT^\top K^{1/2}+sI_n)^{-1}K^{1/2}T\right).
\end{align}
Hence the difference is, 
\begin{align}
   &T^\top(K+sI_n)^{-1}T-(T^\top KT+sI_t)^{-1} \\&= \frac{1}{s}\left(T^\top K^{1/2}(K^{1/2}TT^\top K^{1/2}+sI_n)^{-1}K^{1/2}T - T^\top K^{1/2}(K+sI_n)^{-1}K^{1/2}T\right) \\ 
   &= \frac{1}{s}T^\top K^{1/2}\left((K^{1/2}TT^\top K^{1/2}+sI_n)^{-1} - (K+sI_n)^{-1}\right) K^{1/2}T \label{eqn:final-psd}
   .
\end{align}
But $K^{1/2}TT^\top K^{1/2}\prec K$, so the matrix in \cref{eqn:final-psd} is positive semi-definite.

We note that in the special case when $T$ is obtained by Gauss Lanczos quadrature, the validity of the lower bound can be shown by properties of quadrature rules, see \citet{golub1994matrices, pmlr-v48-lig16, pmlr-v139-potapczynski21a}.
\end{proof}

\section{Experiments}
\label{app:experiments}

In experiments we compare the presented method (BBGP) with sparse methods \citep{pmlr-v5-titsias09a} (SGPR), and iterative based methods from \citet{pmlr-v139-artemev21a} and \citet{gardner2018gpytorch} (CGLB and Iterative GP respectively). We chose three UCI datasets (\texttt{elevators}, \texttt{poletele} and \texttt{bike}) to investigate the performance of our method. The prior constant mean is initialized at $0$, the likelihood variance is initialized with $1.0$, and lengthscales and the scaling parameter of the Mat\'ern 3/2 kernel are initialized at $1.0$, and a separate lengthscale is used for each input dimension. For all constrained positive parameters we set the lower bound at $1\mathrm{e}{-6}$. We used CGLB and SGPR models with $m=1000$ and $m=4000$ inducing points, which we initialized with greedy selection method. Each dataset was randomly split with 2/3 proportion for the training subset and the rest for testing points. All experiments were run for 5 different seeds, and in graphs we report lower and upper quantiles (shaded region) and the median for RMSE and objective.

\begin{figure}[htb]
    \begin{minipage}[t]{0.495\linewidth}
        \centering
        \includegraphics[width=\textwidth]{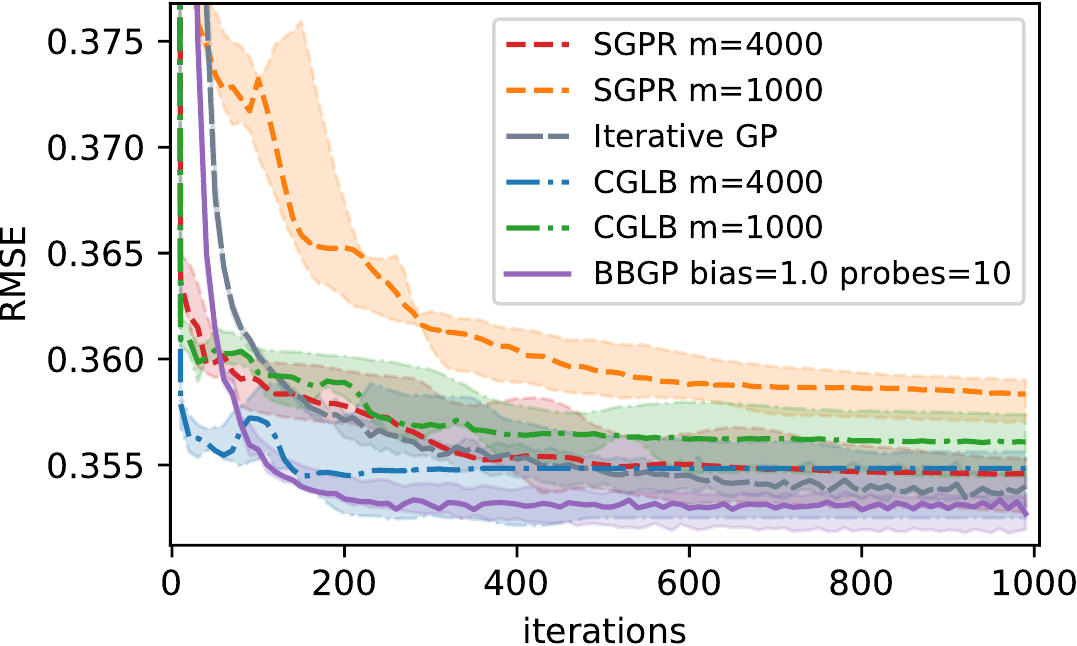}
    \end{minipage}
    \hfill
    \begin{minipage}[t]{0.495\linewidth}
        \centering
        \includegraphics[width=\textwidth]{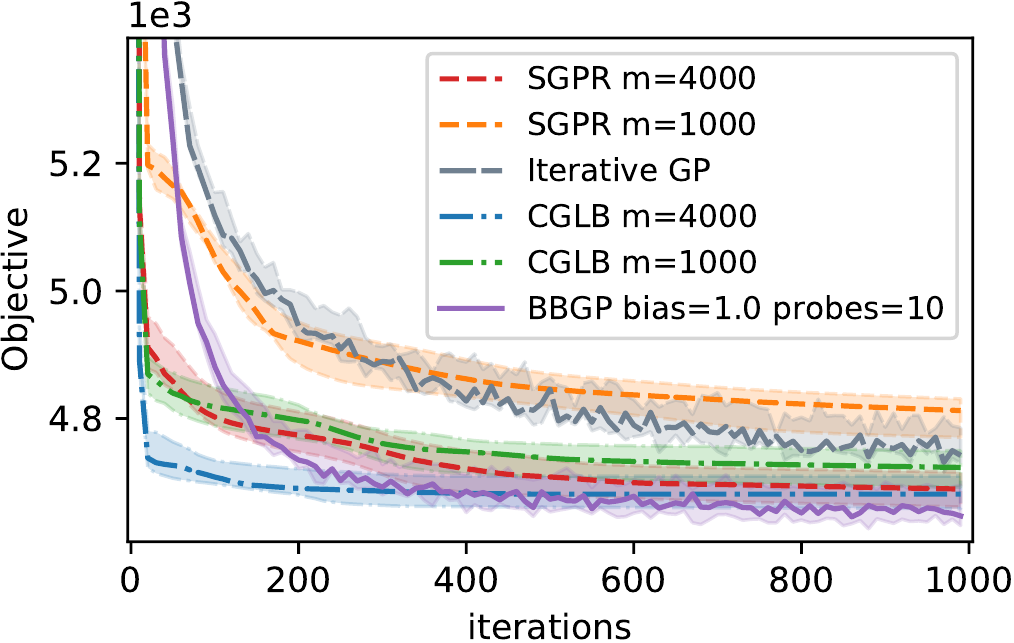}
    \end{minipage}
    \caption{Model performance on testing data and objective (an estimation of negative log marginal likelihood) traces over optimization steps for \texttt{elevators} dataset.}
    \label{fig:elevators-rmse-nlml}
\end{figure}

\begin{figure}[htb]
    \begin{minipage}[t]{0.495\linewidth}
        \centering
        \includegraphics[width=\textwidth]{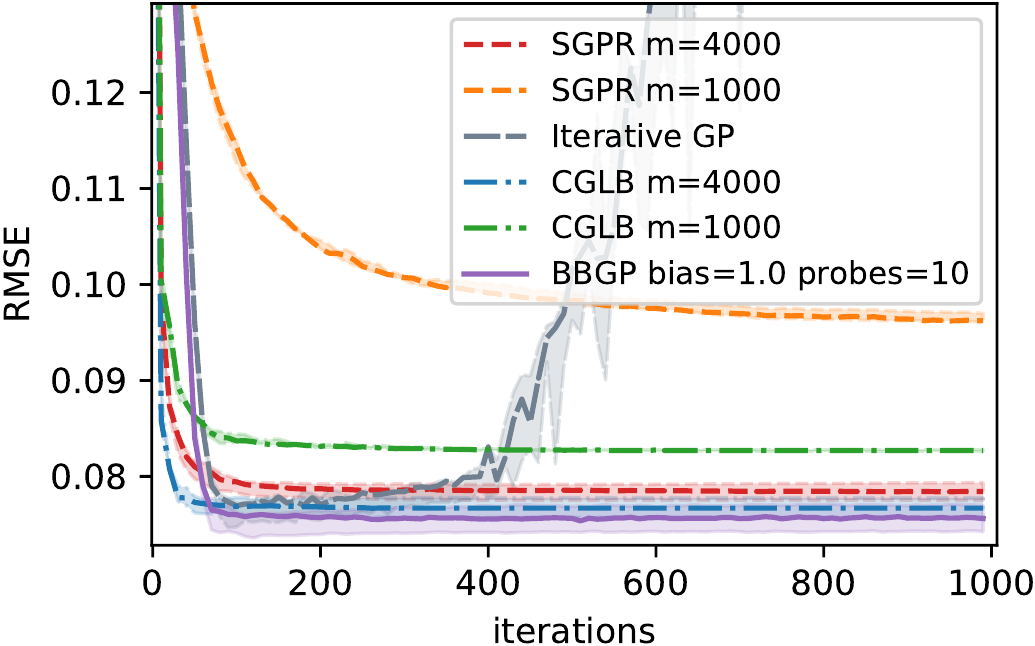}
    \end{minipage}
    \hfill
    \begin{minipage}[t]{0.495\linewidth}
        \centering
        \includegraphics[width=\textwidth]{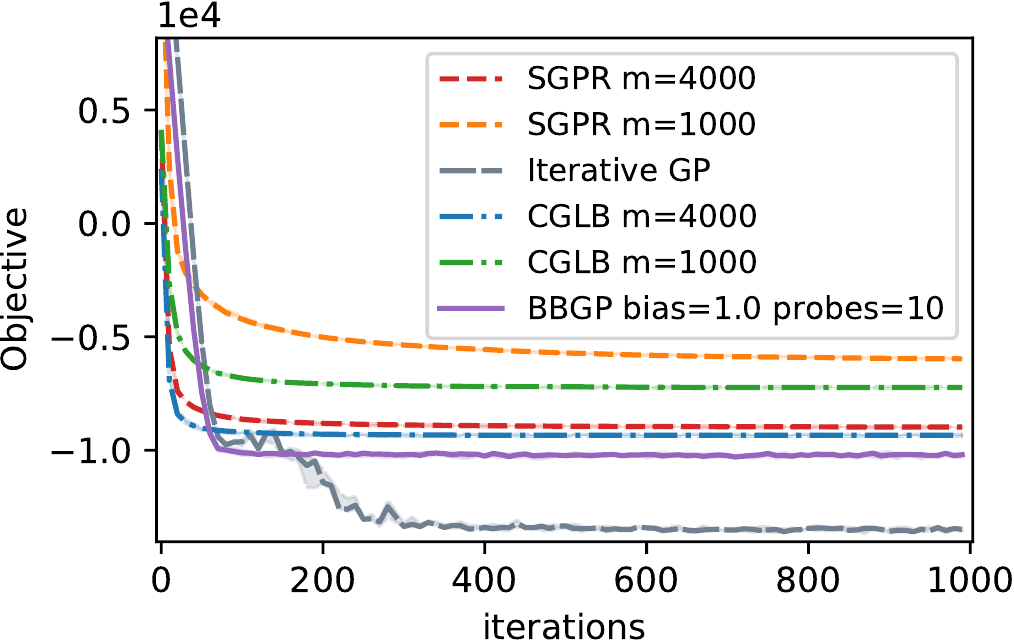}
    \end{minipage}
    \caption{The plot on the right shows RMSE metric for BBGP, SGPR, CGLB and Iterative GP models with different configurations. The plot on the left contains graphics for LML estimation (optimization objective) for the same models. \texttt{poletele} is a low noise dataset where noise level $\approx 1\mathrm{e}{-4}$. BBGP learns model hyperparameters without any visible artifacts in RMSE and LML traces. Meanwhile, Iterative GP demonstrates signs of divergence and LML overestimation.}
    \label{fig:pol-rmse-nlml}
\end{figure}

\begin{figure}[htb]
    \begin{minipage}[c]{0.495\linewidth}
        \centering
        \includegraphics[width=\textwidth]{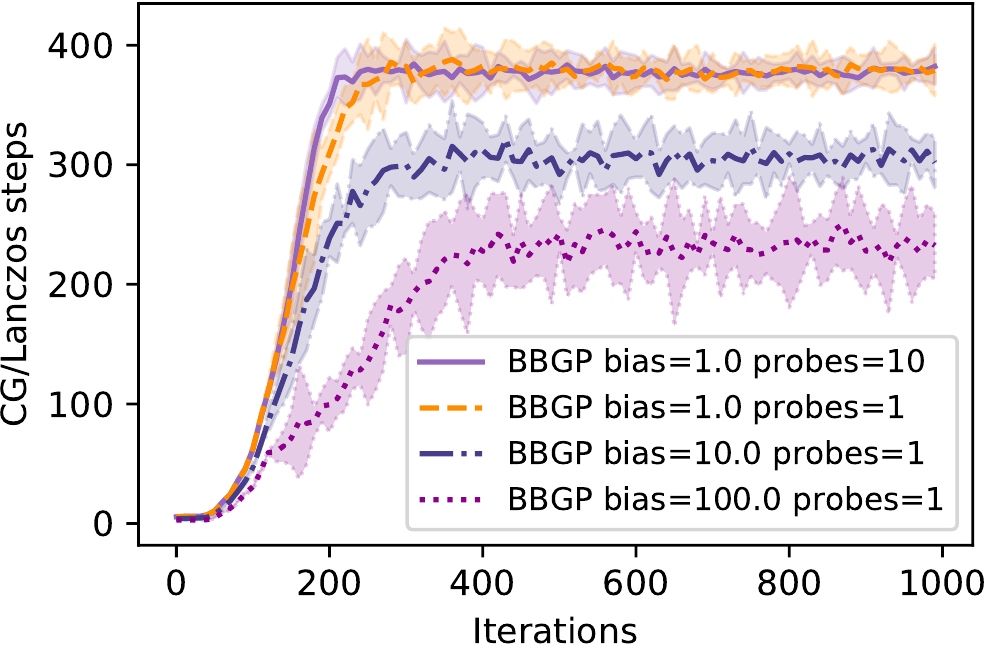}
    \end{minipage}
    \hfill
    \begin{minipage}[c]{0.495\linewidth}
        \centering
        \includegraphics[width=\textwidth]{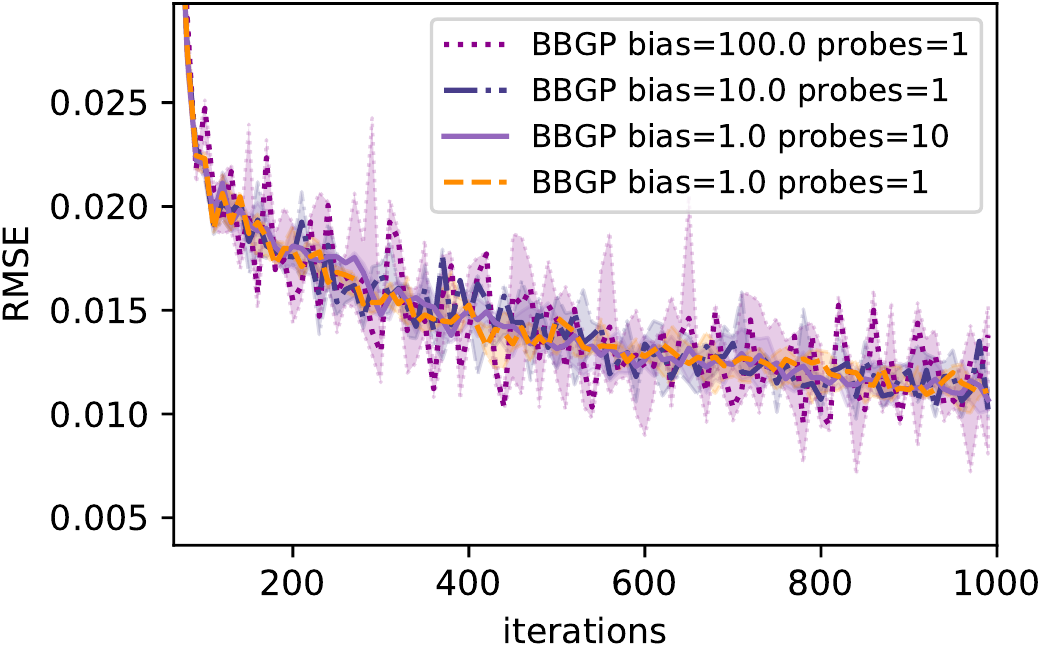}
    \end{minipage}
    \caption{On the left, the graph shows the number of steps spent per optimization iteration for different BBGP model configurations on the \texttt{bike}. The right plot contains RMSE metrics for the same models. Larger bias yields shorter CG/Lanczos runs without degradation in performance.}
    \label{fig:bias-study}
\end{figure}

\end{document}